\def\eqref#1{equation~\ref{#1}}
\def\1{\bm{1}}
\DeclareMathAlphabet{\mathsfit}{\encodingdefault}{\sfdefault}{m}{sl}
\SetMathAlphabet{\mathsfit}{bold}{\encodingdefault}{\sfdefault}{bx}{n}
\title{Towards Natural Robustness Against \\ Adversarial Examples}
\author{Haoyu Chu, Shikui Wei \thanks{ Use footnote for providing further information
about author (webpage, alternative address)---\emph{not} for acknowledging
funding agencies.  Funding acknowledgements go at the end of the paper.} \& Yao Zhao  \\
\\
Institute of Information Science\\
Beijing Key Laboratory of Advanced Information Science and Network Technology\\
Beijing Jiaotong University, Beijing 100044, China\\
\texttt{\{19112001,shkwei,yzhao\}@bjtu.edu.cn} \\
}
\begin{document}

\maketitle

\begin{abstract}
Recent studies have shown that deep neural networks are vulnerable to adversarial examples, but most of the methods proposed to defense adversarial examples cannot solve this problem fundamentally. In this paper, we theoretically prove that there is an upper bound for neural networks with identity mappings to constrain the error caused by adversarial noises. However, in actual computations, this kind of neural network no longer holds any upper bound and is therefore susceptible to adversarial examples. Following similar procedures, we explain why adversarial examples can fool other deep neural networks with skip connections. Furthermore, we demonstrate that a new family of deep neural networks called Neural ODEs \citep{chen2018neural} holds a weaker upper bound. This weaker upper bound prevents the amount of change in the result from being too large. Thus, Neural ODEs have natural robustness against adversarial examples. We evaluate the performance of Neural ODEs compared with ResNet under three white-box adversarial attacks (FGSM, PGD, DI$^{2}$-FGSM) and one black-box adversarial attack (Boundary Attack). Finally, we show that the natural robustness of Neural ODEs is even better than the robustness of neural networks that are trained with adversarial training methods, such as TRADES and YOPO.
\end{abstract}

\section{Introduction}

Deep neural networks have made great progress in numerous domains of machine learning, especially in computer vision. But \cite{szegedy2013intriguing} found that most of the existing state-of-the-art neural networks are easily fooled by adversarial examples that generated by putting only very small perturbations to the input images. Since realizing the unstability of deep neural networks, researchers have proposed different kinds of methods to defense adversarial examples, such as adversarial training \citep{goodfellow2014explaining}, data compression \citep{dziugaite2016study}, and distillation defense \citep{papernot2016distillation}. But each of these methods is a remedy for the original problem, and none of these methods can solve it fundamentally. For example, \cite{moosavi2016deepfool} showed that no matter how much adversarial examples are added to training sets, there are new adversarial examples that can successfully attack the adversarial trained deep neural network. 

So, avoiding adversarial examples technically cannot solve the most essential problem: why such subtle change in adversarial examples can beat deep neural networks? Meanwhile, it leads to a more important question: how to make deep neural networks have natural robustness so that they can get rid of malicious adversarial examples.

Early explanations for adversarial examples considered that a smoothness prior is typically valid for kernel methods that imperceptibly tiny perturbations of a given image do not normally change the underlying class, while the smoothness assumption does not hold for deep neural networks due to its high non-linearity \citep{szegedy2013intriguing}. This analysis underlies plain deep neural networks like AlexNet \citep{krizhevsky2012imagenet}. But later than that, \cite{goodfellow2014explaining} claim adversarial examples are a result of models being too linear rather than too non-linear, they can be explained as a property of high-dimensional dot products. Unfortunately, both of these explanations seem to imply that adversarial examples are inevitable for deep neural networks.

On the other hand, we notice that skip connections are widely used in current deep neural networks after the appearance of Highway Network \citep{srivastava2015highway} and ResNet \citep{he2016deep}. It turns out that the identity mapping in ResNet is formally equivalent to one step of Euler’s method which has been used to solve ordinary differential equations \citep{weinan2017proposal}. More than that, other kinds of skip connections used by different network architectures can be considered as different numerical methods for solving ordinary differential equations. The link between numerical ordinary differential equations with deep neural networks can bring us a whole new perspective to explain adversarial examples through the numerical stability analysis.

In this paper, we attempt to utilize the natural property of neural networks to defense adversarial examples. We first analyze how adversarial examples affect the output of neural networks with identity mappings, obtain an upper bound for this kind of neural networks, and find that this upper bound is impractical in actual computations. In the same way, we figure out why adversarial examples can fool commonly used deep neural networks with skip connections. Then, we demonstrate that Neural ODEs hold a weaker upper bound and verify the natural robustness of Neural ODEs under four types of perturbations. Finally, we compare Neural ODEs with three types of adversarial training methods to show that the natural robustness of Neural ODEs is better than the robustness of neural networks that are trained with adversarial training. The main contributions of our work are as follows:

\begin{itemize}
\item We introduce and formalize the numerical stability analysis for deep neural networks with identity mappings, prove that there is an upper bound for neural networks with identity mappings to constrain the error caused by adversarial noises.

\item We provide a new reason why commonly used deep neural networks with skip connections cannot resist adversarial examples.

\item  We demonstrate that Neural ODEs hold a weaker upper bound which limits the amount of change in the result from being too large. Compare with ResNet and three types of adversarial training methods, we show the natural robustness of Neural ODEs. 
\end{itemize}

\section{Related works}

\subsection{Adversarial Defense} 

Adversarial training typically uses a robust optimization to generate adversarial examples for training deep neural networks. \cite{madry2017towards} take the optimization as a saddle point problem. \cite{zhang2019you} cast adversarial training as a discrete time differential game. Adversarial training can be seen as a data augmentation particularly enhance the robustness to white-box attacks \citep{tramer2017ensemble}. \cite{zantedeschi2017efficient} augmented the training sets with examples perturbed using Gaussian noises which can also enhance the robustness to black-box attacks. \cite{lee2017generative} proposed a novel adversarial training method using a generative adversarial network framework. Besides, \cite{finlay2018improved} augmented adversarial training with worst case adversarial training which improves adversarial robustness in the $\ell_{2}$ norm on CIFAR10.

Modifying the neural networks by using auto encoders, input gradient regularization and distillation can result in robustness against adversarial attacks \citep{bai2017alleviating, ross2017improving, papernot2016distillation}. Besides, there are some biologically inspired deep learning models designed to have natural robustness against adversarial examples. \cite{nayebi2017biologically} developed a scheme similar to nonlinear dendritic computation to train deep neural networks to make them robust to adversarial attacks. \cite{krotov2018dense} proposed Dense Associative Memory (DAM) models and suggested that DAM with higher order energy functions are closer to human visual perception than deep neural networks with ReLUs.

In addition to augmenting datasets or modifying original neural networks, there exist adversarial defense methods that rely on using external models and detecting adversarial examples. \cite{akhtar2018defense} presented Perturbation Rectifying Network (PRN) as ‘pre-input’ layers to a targeted model,  if a perturbation is detected, the output of the PRN is used for label prediction instead of the actual image. \cite{xu2017feature} proposed a strategy called feature squeezing to reduce the search space available to an adversary by coalescing samples that correspond to many different feature vectors in the original space into a single sample.

\section{Numerical stability analysis for DNNs}
\subsection{ResNet and Euler’s Method}
The building block of ResNet is defined as
\begin {equation} \label{equ:resnet}
\mathbf{y}_{n+1}=\mathbf{y}_n+f(\mathbf{y}_n;\mathbf{\theta}_n)
\end {equation}
Where $n\in\{0...N(h)-1\}$,  $\mathbf{y}_n$ and $\mathbf{y}_{n+1}$ are input and output vectors of the layers considered. The function $f(\mathbf{y}_n;\theta_n)$ represents the residual mapping and $\theta_n$ are weights to be learned.

In numerical methods for solving ordinary differential equations, Euler’s method is defined by taking this to be exact:
\begin {equation} \label{equ:euler}
\mathbf{y}_{n+1}=\mathbf{y}_n+hf(t_n,\mathbf{y}_n;\mathbf{\theta}_n)
\end {equation}

It can be easily seen that Eqn.(\ref{equ:resnet}) is a special case of Euler’s method when the step size $h=1$. The iterative updates of ResNet can be seen as an Euler discretization of a continuous transformation \citep{lu2018beyond, haber2017stable, ruthotto2019deep}. 
When the input data is perturbed by adversarial noises $\mathbf{\epsilon}$, we denote the adversarial example by
\begin {equation} 
\mathbf{z}_0=\mathbf{y}_0+\epsilon
\end {equation}

To perform stability analysis for ResNet with adversarial example $\mathbf{z}_0$ as input, we define a numerical solution $\mathbf{z}_n$ by
\begin {equation} 
\mathbf{z}_{n+1}=\mathbf{z}_n+hf(t_n,\mathbf{z}_n;\mathbf{\theta}_n) \label{equ:numerical_solution}
\end {equation}
and provide following theorem to show that the amount of change between $\mathbf{y}_{n}$ and $\mathbf{z}_{n}$ holds an upper bound by some assumptions.

\newtheorem{thm}{\bf Theorem}[section]
\begin{thm} \label{thm1}
Let $f(t,\mathbf{y};\mathbf{\theta})$ be a continuous function for  $t_0\le t\le b$ and $-\infty<\mathbf{y}<\infty$, and further assume that $f(t,\mathbf{y};\mathbf{\theta})$ satisfies the Lipschitz condition. Then compare the two numerical solutions $\mathbf{y}_{n}$ and $\mathbf{z}_{n}$ as
$h\rightarrow0$, there is a constant $c\geq0$, such that amount of change between $\mathbf{y}_{n}$ and $\mathbf{z}_{n}$ satisfies
\begin {equation}  \label{equ:theorem1}
\max_{0 \leq n \leq N(h)} |\mathbf{z}_{n}-\mathbf{y}_{n}| \leq c|\epsilon|
\end {equation}
\end{thm}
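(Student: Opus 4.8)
The plan is to treat the difference $e_n := \mathbf{z}_n - \mathbf{y}_n$ as an error sequence and to control its growth by a discrete analogue of Gronwall's inequality. First I would subtract the clean Euler update (Eqn.~\ref{equ:euler}) from the perturbed update (Eqn.~\ref{equ:numerical_solution}) to obtain the recurrence
\begin{equation}
e_{n+1} = e_n + h\bigl[f(t_n,\mathbf{z}_n;\mathbf{\theta}_n) - f(t_n,\mathbf{y}_n;\mathbf{\theta}_n)\bigr],
\end{equation}
with initial condition $e_0 = \mathbf{z}_0 - \mathbf{y}_0 = \epsilon$. This isolates the propagation of the initial perturbation layer by layer and turns the theorem into a question about how fast $|e_n|$ can grow.

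Next I would take norms and invoke the Lipschitz hypothesis on $f$, so that with Lipschitz constant $L$ we have $|f(t_n,\mathbf{z}_n;\mathbf{\theta}_n) - f(t_n,\mathbf{y}_n;\mathbf{\theta}_n)| \leq L\,|e_n|$. Combined with the triangle inequality, this yields the one-step amplification estimate
\begin{equation}
|e_{n+1}| \leq (1 + hL)\,|e_n|.
\end{equation}
A short induction on $n$ then gives $|e_n| \leq (1 + hL)^n\,|\epsilon|$ for every index $n$, reducing everything to bounding the scalar factor $(1+hL)^n$ uniformly.

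The final and most delicate step is to convert $(1+hL)^n$, which a priori could blow up as the discretization is refined, into a constant independent of $h$. The key observation is that the index runs only up to $N(h)$, where $N(h)\,h$ equals the fixed length $b - t_0$ of the integration interval; hence $nhL \leq N(h)\,hL = (b-t_0)L$. Applying the elementary inequality $(1+hL)^n \leq e^{nhL}$, I would conclude
\begin{equation}
\max_{0 \leq n \leq N(h)} |e_n| \leq e^{(b-t_0)L}\,|\epsilon|,
\end{equation}
which is exactly Eqn.~\ref{equ:theorem1} with $c = e^{(b-t_0)L} \geq 0$. Since this $c$ depends only on $L$ and the interval length, the bound is stable under refinement and survives the limit $h \to 0$.

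I expect the main obstacle to be precisely this uniformity: one must guarantee that increasing the number of layers $N(h)$ does not degrade the estimate, which hinges on the product $N(h)\,h$ remaining pinned to the interval length rather than the raw layer count $n$ entering the exponent unchecked. The Lipschitz condition is what keeps the per-step amplification factor $(1+hL)$ tame enough for the exponential bound to close, and it is worth flagging that the bound is vacuous unless $N(h)\,h$ is genuinely held fixed as $h\to 0$.
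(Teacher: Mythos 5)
Your proposal is correct and follows essentially the same route as the paper's own proof: define the error sequence $e_n=\mathbf{z}_n-\mathbf{y}_n$, subtract the two Euler recurrences, apply the Lipschitz bound to get the one-step factor $(1+hK)$, iterate, and control $(1+hK)^n\leq e^{nhK}\leq e^{(b-t_0)K}$. Your explicit remark that the bound hinges on $N(h)\,h$ being pinned to the interval length is a point the paper leaves implicit, but it is the same argument.
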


\begin{proof}
Let $\mathbf{e}_{n}=\mathbf{z}_{n}-\mathbf{y}_{n},n\geq0$. Then $\mathbf{e}_{0}=\mathbf{\epsilon}$, and subtracting Eqn.(\ref{equ:euler}) from Eqn.(\ref{equ:numerical_solution}), we obtain
\begin {equation} 
\mathbf{e}_{n+1}=\mathbf{e}_{n}+h[{f(t_n,\mathbf{z}_n;\theta_{n})}-f(t_n,\mathbf{y}_n;\theta_{n})]
\end {equation}
Assume that the derivative function $f(t,\mathbf{y};\mathbf{\theta})$ satisfied the following Lipschitz condition: there exists $K\geq0$ such that
\begin {equation} \label{equ:lipschitz}
|f(t,\mathbf{y}_1;\mathbf{\theta})-f(t,\mathbf{y}_2;\mathbf{\theta})| \leq K|\mathbf{y}_1-\mathbf{y}_2|
\end {equation}
Taking bounds using Eqn.(\ref{equ:lipschitz}), we obtain
\begin {equation} 
|\mathbf{e}_{n+1}| \leq |\mathbf{e}_{n}|+hK|\mathbf{z}_{n}-\mathbf{y}_{n}|
\end {equation}
\begin {equation} 
|\mathbf{e}_{n+1}| \leq(1+hK)|\mathbf{e}_{n}|
\end {equation}

Apply this recursively to obtain
\begin {equation} 
|\mathbf{e}_{n}| \leq (1+hK)^n|\mathbf{e}_{0}|
\end {equation}
Using the inequality $(1+t)^m\leq e^{mt}$, for any $t\geq-1$, any $m\geq0$, we obtain
\begin {equation} 
(1+hK)^n\leq \mathbf{e}^{nhK}=\mathbf{e}^{(b-t_0)K}
\end {equation}
and this implies the main result Eqn.(\ref{equ:theorem1}). 
\end{proof}

Roughly speaking, theorem \ref{thm1} means that a small adversarial perturbation initial value of the problem leads to a small change in the solution, provided that the function $f(t,\mathbf{y};\mathbf{\theta})$ is continuous and the step size $h$ is sufficiently small. 

Obviously, ResNet does not satisfied the assumption of continuity and its step size $h$ always equal to 1. Particularly, the functions $f(t,\mathbf{y};\mathbf{\theta})$ in ResNet are composite function which contains ReLU activation functions. Although ReLU activation functions $g(x)=max(0,x)$ break the continuity,  due to their contractive property, i.e. satisfies $\|g(x)-g(x+\epsilon)\|\leq|\epsilon|$ for all $x,\epsilon$; it follows that 
\begin {equation} \label{equ:weaker_bound}
\begin{aligned}
&\|g_n(x;\theta_n)-g_n(x+\epsilon;\theta_n)\|\\
&=\|max(0,\theta_{n}x+b_n)-max(0,\theta_{n}(x+\epsilon)+b_n))\|\\
&\leq\|\theta_{n}\epsilon\|\leq\|\theta_n\|\|\epsilon||	
\end{aligned}
\end {equation}

This provides a weaker upper bound for ReLU to mitigate the loss of continuity. 

Even if the step size $h$ in ResNet can be adjusted by multiply outputs of convolution layers by a chosen constant, but unfortunately, the step size $h$ in ResNet cannot be too small since a very small step size decreases the efficiency in actual computations. We can experimentally show that when step size $h$ is small (such as $10^{-1}$, $10^{-2}$ and $10^{-3}$), ResNet has no obvious robustness against adversarial examples, and when step size is very small (such as $10^{-8}$, $10^{-9}$, $10^{-10}$), ResNet is difficult to train, so not only the classification accuracy of adversarial examples but also the accuracy of clean examples is quite low.

To summarize, the main reason for ResNet's failure in adversarial examples is the step size $h=1$ destroys the upper bound given by Eqn.(\ref{equ:theorem1}) and we cannot find a proper way to deal with it. Thus, when the input of ResNet is perturbed by adversarial noises, the amount of change in the result will become unpredictably large so that ResNet can no longer correctly classify the input. 

\subsection{Neural ODEs} \label{Neural ODEs}
The residual block can be described as $\mathbf{y}_{n+1}=\mathbf{y}_n+hf(\mathbf{y}_n;\mathbf{\theta}_n)$ with step size $h=1$. When taking smaller steps and add more layers, which in the limit, Neural ODEs parameterize the continuous dynamics of hidden units using an ordinary differential equation specified by a neural network
\begin {equation} \label{equ:neural_odes}
\lim_{h\rightarrow0}\frac{\mathbf{y}_{n+h}-\mathbf{y}_n}{h}=\frac{d\mathbf{y}}{dh}=f(t,\mathbf{y};\theta)
\end {equation}

The solution of Eqn.(\ref{equ:neural_odes}) can be computed using modern ODE solvers such as Runge–Kutta \citep{runge1895numerische, kutta1901beitrag}. Euler's method only uses $f(t_n,\mathbf{y}_n;\mathbf{\theta}_n)$, which means from any point on a curve, we can find an approximation of a nearby point on the curve by moving a short distance along a tangent line to the curve. Instead of using a single tangent line, modern numerical ODE methods often use lots of different tangent lines and weighted sum all of them to generate a super tangent line, we denote the super tangent line by $F(t_n,\mathbf{y}_{n};\theta_{n},f)$. 

We can show that modern ODE solvers based on Runge–Kutta are stable under some assumptions, the amount of change between $\mathbf{y}_{n}$ and $\mathbf{z}_{n}$ holds an upper bound similar to theorem \ref{thm1}. 

\begin{thm} \label{thm2}
Let $f(t,\mathbf{y};\mathbf{\theta})$ be a continuous function for  $t_0\le t\le b$ and $-\infty<\mathbf{y}<\infty$, and further assume that $f(t,\mathbf{y};\mathbf{\theta})$, $F(t,\mathbf{y};\theta,f)$ satisfies the Lipschitz condition. Then compare the two numerical solutions $\mathbf{y}_{n}$ and $\mathbf{z}_{n}$ as $h\rightarrow0$, there is a constant $\widehat{c}\geq0$, such that amount of change between $\mathbf{y}_{n}$ and $\mathbf{z}_{n}$ satisfies
\begin {equation} \label{equ:theorem2}
\max_{0 \leq n \leq N(h)} |\mathbf{z}_{n}-\mathbf{y}_{n}| \leq \widehat{c}|\epsilon|
\end {equation}
\end{thm}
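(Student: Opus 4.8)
The plan is to mirror the argument used for Theorem~\ref{thm1} almost verbatim, with the Euler increment $f(t_n,\mathbf{y}_n;\theta_n)$ replaced throughout by the Runge--Kutta ``super tangent'' increment $F(t_n,\mathbf{y}_n;\theta_n,f)$. The reference solution and the perturbed solution are now advanced by the same one-step map, $\mathbf{y}_{n+1}=\mathbf{y}_n+hF(t_n,\mathbf{y}_n;\theta_n,f)$ and $\mathbf{z}_{n+1}=\mathbf{z}_n+hF(t_n,\mathbf{z}_n;\theta_n,f)$, so the whole point is that the stability estimate depends only on the increment function being Lipschitz, not on which numerical scheme produced it. This is exactly why the theorem is stated under a Lipschitz hypothesis on $F$ rather than on $f$ alone.

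First I would set $\mathbf{e}_n=\mathbf{z}_n-\mathbf{y}_n$ with $\mathbf{e}_0=\epsilon$, and subtract the two recurrences to obtain $\mathbf{e}_{n+1}=\mathbf{e}_n+h[F(t_n,\mathbf{z}_n;\theta_n,f)-F(t_n,\mathbf{y}_n;\theta_n,f)]$. Invoking the hypothesis that $F$ satisfies a Lipschitz condition with some constant $\widehat{K}\ge 0$, namely $|F(t,\mathbf{y}_1;\theta,f)-F(t,\mathbf{y}_2;\theta,f)|\le\widehat{K}|\mathbf{y}_1-\mathbf{y}_2|$, the triangle inequality gives $|\mathbf{e}_{n+1}|\le(1+h\widehat{K})|\mathbf{e}_n|$. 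Unrolling this discrete Gr\"onwall inequality yields $|\mathbf{e}_n|\le(1+h\widehat{K})^n|\epsilon|$, and the same elementary bound $(1+t)^m\le e^{mt}$ used in Theorem~\ref{thm1}, together with $nh\le b-t_0$, turns this into $|\mathbf{e}_n|\le e^{(b-t_0)\widehat{K}}|\epsilon|$. Taking the maximum over $0\le n\le N(h)$ and setting $\widehat{c}=e^{(b-t_0)\widehat{K}}$ then delivers Eqn.~(\ref{equ:theorem2}).

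The genuinely new work, and the step I expect to be the main obstacle, is not this mechanical recursion but justifying the Lipschitz hypothesis on $F$ and relating $\widehat{K}$ to the constant $K$ of $f$ --- this is what underlies the paper's claim of a \emph{weaker} bound. For an $s$-stage scheme, $F$ is a weighted combination $\sum_i b_i\, f(t_n+c_i h,\, \mathbf{y}_n+h\sum_j a_{ij}k_j;\theta_n)$ of stage evaluations, where each stage $k_i$ depends on $\mathbf{y}_n$ through $f$. I would therefore show by induction on the stage index that every $k_i$ is Lipschitz in $\mathbf{y}_n$ with a constant controlled by $K$ and the Butcher coefficients $a_{ij},b_i,c_i$; for explicit schemes this follows directly by composition, while for implicit schemes it requires $h$ small enough that $hK\max_i\sum_j|a_{ij}|<1$ so the stage equations contract and admit a unique Lipschitz solution. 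Summing with weights $b_i$ then produces an explicit $\widehat{K}=\widehat{K}(K,h)$. The delicate point is controlling this $h$-dependence uniformly so the estimate survives the limit $h\to 0$, and arguing that averaging over many tangent directions keeps $\widehat{K}$ from blowing up relative to the Euler constant $K$, which is the precise sense in which the Runge--Kutta bound constrains the change in the result.
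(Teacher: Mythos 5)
Your proposal is correct and follows exactly the route the paper intends: the paper's own ``proof'' of this theorem is a single sentence deferring to the argument of Theorem~\ref{thm1}, and you have simply carried out that analogy explicitly, replacing the Euler increment by the Runge--Kutta increment function $F$ and using its assumed Lipschitz constant $\widehat{K}$ to get $\widehat{c}=e^{(b-t_0)\widehat{K}}$. Your additional discussion of deriving the Lipschitz property of $F$ from that of $f$ via the Butcher coefficients goes beyond what the theorem's hypotheses require (the statement simply assumes $F$ is Lipschitz), but it is a sound and welcome elaboration rather than a deviation.
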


\begin{proof}
This result can be obtained in analogy with Eqn.(\ref{equ:theorem1}) in theorem \ref{thm1}.
\end{proof}

The neural network in Neural ODEs also use ReLU activation functions. And we already show that Eqn.(\ref{equ:weaker_bound}) provides a weaker upper bound for ReLU. 

Besides, modern ODE solvers, such as Fehlberg method \citep{fehlberg1969low} and DOPRI method \citep{dormand1980family}, can adaptively adjust the step size until that the desired tolerance is reached. At each step, two different approximations (for example, a fourth order Runge–Kutta and a fifth order Runge–Kutta) for the solution are made and compared. If the two answers are in close agreement, the approximation is accepted. If the two answers do not agree to a specified accuracy, the step size is reduced. If the answers agree to more significant digits than required, the step size is increased. So modern ODE solvers provide a weaker upper bound for step size when it does not agree with $h\rightarrow0$.

To summarize, although Neural ODEs cannot hold the strong upper bound provided by Eqn.(\ref{equ:theorem2}), it still holds a weaker upper bound to constrain the error caused by adversarial noises. We will experimentally show that Neural ODEs is more robust on adversarial examples compared with ResNet.

\subsection{Other Deep Neural Networks with Skip Connections}
 
Following similar procedures, we can figure out the reason why other deep neural networks with skip connections cannot resist adversarial examples.

\subsubsection{PolyNet}
PolyNet \citep{zhang2017polynet} introduced a family of models called PolyInception. Each PolyInception module is a polynomial combination of Inception units that can be described as
\begin{equation} \label{equ:polynet}
\left(I+F+F^{2}\right) \cdot \mathbf{y}=\mathbf{y}+F(\mathbf{y})+F(F(\mathbf{y}))
\end{equation}
\noindent Where $\mathbf{y}$ denotes the input, $I$ the identity operator, and $F$ the nonlinear transform carried out by the residual block, which can also be considered as an operator.

It has been shown that Eqn.(\ref{equ:polynet}) can be interpreted as an approximation to one step of the backward (or implicit) Euler method \citep{lu2018beyond}: 
\begin{equation} \label{equ:backward_euler}
\mathbf{y}_{n+1} = \mathbf{y}_{n}+hf(t_{n+1},\mathbf{y}_{n+1})
\end{equation}
And from Eqn.(\ref{equ:backward_euler}) we can get  
\begin{equation}
\mathbf{y}_{n+1}=(I-hf)^{-1} \mathbf{y}_{n}
\end{equation}
Where $(I-hf)^{-1}$ can be formally rewritten as

\begin{equation}
I+hf+(hf)^{2}+\cdots+(hf)^{n}+\cdots
\end{equation}

So, there are two differences between the polynomial combination with the backward Euler method:

\begin{itemize}
\item The polynomial combination is only a second order approximation to the backward Euler method, the truncation error generated when high order terms are ignored.
\item PolyNet faces the same problem with ResNet that the step size $h=1$. Thus, PolyNet cannot holds the upper bound given by the backward Euler method.
\end{itemize}

And step size $h=1$ is the main reason that PolyNet fails to resist adversarial examples.

\subsubsection{FractalNet}
FractalNet \citep{larsson2016fractalnet} introduced a design strategy for neural network macro-architecture based on self-similarity. The expansion rule generates a fractal architecture with $C$ intertwined columns can be described as
\begin{equation}
f_{C+1}(\mathbf{y})=\frac{1}{2}(f_{C} \circ f_{C})(\mathbf{y})+\frac{1}{2}f_{1}(\mathbf{y})
\end{equation}
Where $f_{1}(\mathbf{y})=\operatorname{conv}(\mathbf{y})$ and $\circ$ denotes composition.

This expansion rule resembles to the Runge-Kutta method of order 2 (also known as Heun's method)\citep{lu2018beyond}
\begin{equation}
\mathbf{y}_{n+1}=\mathbf{y}_{n}+\frac{1}{2}h(f(t_{n}, \mathbf{y}_{n})+f(t_{n+1}, \mathbf{y}_{n}+h f(t_{n}, \mathbf{y}_{n})))
\end{equation}

So, there are two differences between the expansion rule with Heun's method:

\begin{itemize}
\item The expansion rule losses two identity mapping, namely $\mathbf{y}_{n}$, compared to Heun's method. 
\item FractalNet faces the same problem with ResNet that the step size $h=1$. Thus, FractalNet cannot holds the upper bound given by Heun's method.
\end{itemize}

And step size $h=1$ is the main reason that FractalNet fails to resist adversarial examples.

\subsubsection{RevNet}
RevNet \citep{gomez2017reversible} introduced a variant of ResNet where each layer's activations can be reconstructed exactly from the next layer's. Each reversible block takes inputs and produces outputs according to the following additive coupling rules 
\begin{equation}
\begin{array}{l}
X_{n+1}=X_{n}+f_{1}\left(Y_{n}\right) \\
Y_{n+1}=Y_{n}+f_{2}\left(X_{n+1}\right)
\end{array}
\end{equation}

This additive coupling rules can be interpreted as Euler’s method applies to the following systems of differential equations \citep{lu2018beyond}
\begin{equation}
\begin{array}{l}
\dot{X}=f_{1}(Y, t)\\
\dot{Y}=f_{2}(X, t)
\end{array}
\end{equation}

RevNet faces the same problem with ResNet that the step size $h=1$. Thus, RevNet cannot holds the upper bound given by Euler's method for the dynamic systems.

\section{Experimental results}
 
\subsection{ResNet with step size $h\neq1$}

We first evaluate the performance of ResNets with different step sizes on adversarial examples. Adopting the same hyperparameter settings for ResNet56 that presented in \cite{he2016deep}, we train 11 ResNet56s on CIFAR10 datasets with the step size decreases from $10^{-0}$ to $10^{-10}$. 

\begin{figure}[h] 
\begin{center}
\includegraphics[width=12cm,height=6cm]{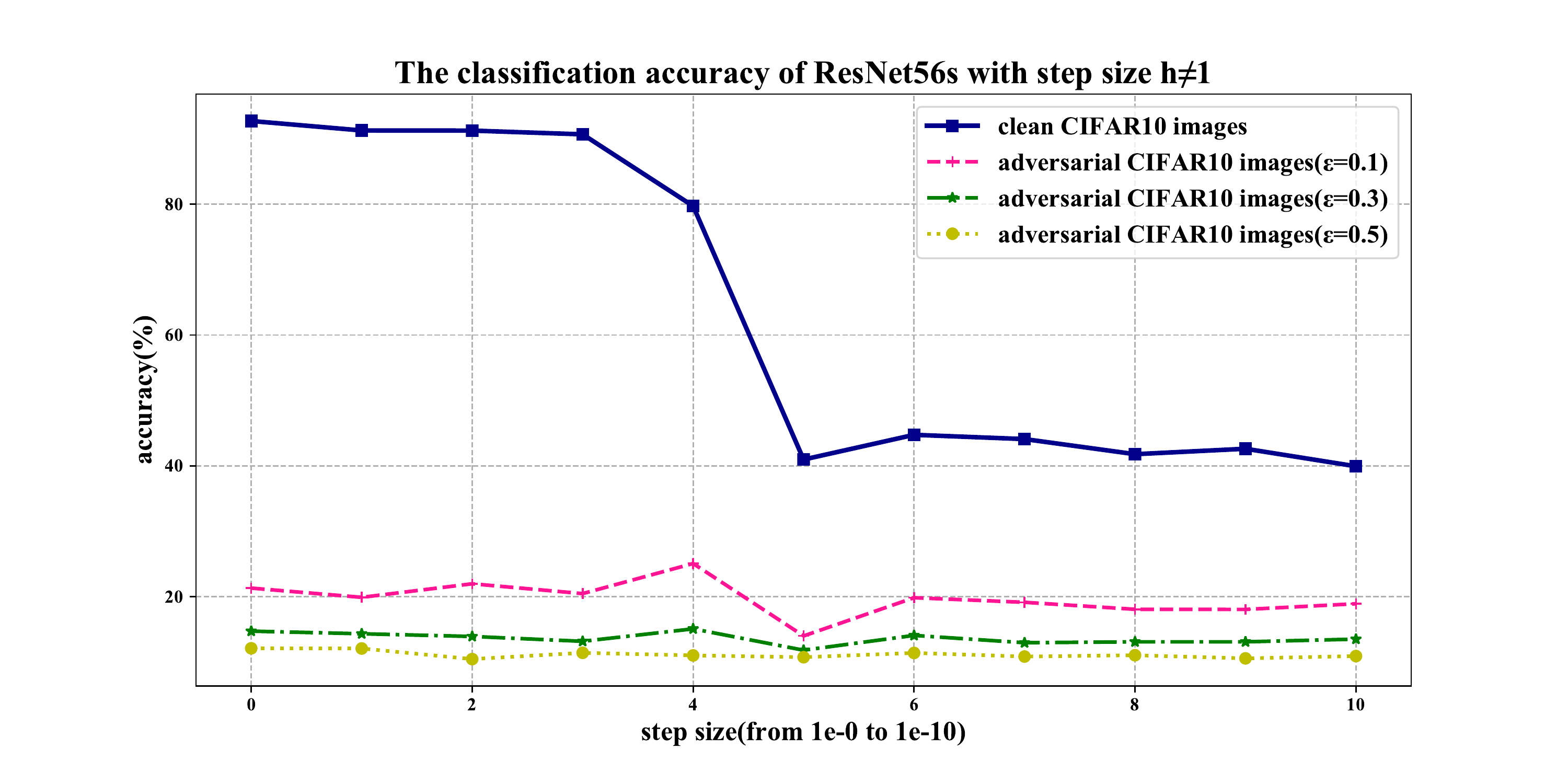}
\end{center}
\caption{The classification accuracy of 11 ResNet56s both on clean CIFAR10 images and three kinds of adversarial CIFAR10 images(on the log scale). The adversarial CIFAR10 images are generated by Fast Gradient Sign Method with $\epsilon=0.1, 0.3, 0.5$ on another neural network which has three residual blocks.
}
\label{exp1}
\end{figure}

Through the numerical stability analysis of ResNet, we know that the classification result should be well behaved when considering small adversarial noises, provided that the step size of $h$ is sufficiently small. In actual computations, however, when we decrease the step size from $10^{-1}$ to $10^{-10}$, ResNet has no obvious robustness against adversarial examples. And when step size is very small, ResNet is difficult to train. Therefore, the accuracy of ResNet56 is considerably low both on clean and adversarial inputs. We consider that ResNet is impossible to have natural robustness against adversarial examples.

\subsection{The natural robustness of Neural ODEs}

As we mentioned in \ref{Neural ODEs}, Neural ODEs define a continuous dynamic system, which is equivalent to ResNet with step size $h\rightarrow0$. And Neural ODEs hold a weaker upper bound to constrain the error caused by adversarial noises. In this section, we evaluate the performance of Neural ODEs under three white-box attacks, FGSM\citep{goodfellow2014explaining}, PGD\citep{madry2017towards}, DI$^{2}$-FGSM\citep{xie2019improving} and one black-box adversarial attack, Boundary Attack\citep{brendel2017decision}.

Without data augmentation, the accuracy of ResNet on clean MNIST and CIFAR10 datasets is 99.33\% and 89.60\%, the accuracy of Neural ODEs on clean MNIST and CIFAR10 datasets is 99.05\% and 69.94\%. Experiment details can be seen in appendix \ref{appendix1}.

\subsubsection{The evaluation on white-box adversarial attacks}

Table \ref{exp2} shows that Neural ODEs have natural robustness under FGSM and PGD attacks. The classification accuracy of adversarial MNIST images generated by FGSM only drops by less than $2\%$ for all $\epsilon$. Even the strongest first order attack method PGD can hardly beat Neural ODEs on MNIST. Although the performance of Neural ODEs on clean CIFAR10 images is worse than state-of-the-art networks like ResNet, it still works well on adversarial CIFAR10 images generated by PGD. 

\begin{table}[h]
\begin{center}
\begin{tabular}{c|c|l|c|cccc}
\hline
Datasets               & \multicolumn{2}{c|}{Attack}                & Models      & $\epsilon=0.2$   & $\epsilon=0.3$    & $\epsilon=0.4$    & $\epsilon=0.5$    \\ \hline
\multirow{4}{*}{MNIST} & \multicolumn{2}{c|}{\multirow{2}{*}{FGSM}} & ResNet50    & 25.20            & 10.95             & 9.80              & 9.75              \\
                       & \multicolumn{2}{c|}{}                      & Neural ODEs & \textbf{97.50}   & \textbf{97.46}    & \textbf{97.52}    & \textbf{97.48}    \\
                       & \multicolumn{2}{c|}{\multirow{2}{*}{PGD}}  & ResNet50    & 0                & 0                 & 0                 & 0                 \\
                       & \multicolumn{2}{c|}{}                      & Neural ODEs & \textbf{93.62}   & \textbf{90.55}    & \textbf{84.64}    & \textbf{75.23}    \\ \hline
Datasets               & \multicolumn{2}{c|}{Attack}                & Models      & $\epsilon=8/255$ & $\epsilon=12/255$ & $\epsilon=16/255$ & $\epsilon=20/255$ \\ \hline
\multirow{4}{*}{CIFAR} & \multicolumn{2}{c|}{\multirow{2}{*}{FGSM}} & ResNet50    & 4.65             & 4.16              & 3.60              & 3.26              \\
                       & \multicolumn{2}{c|}{}                      & Neural ODEs & \textbf{60.70}   & \textbf{60.65}    & \textbf{60.67}    & \textbf{60.67}    \\
                       & \multicolumn{2}{c|}{\multirow{2}{*}{PGD}}  & ResNet50    & 0                & 0                 & 0                 & 0                 \\
                       & \multicolumn{2}{c|}{}                      & Neural ODEs & \textbf{59.15}   & \textbf{56.93}    & \textbf{53.58}    & \textbf{49.53}   
\end{tabular}
\end{center}
\caption{The classification accuracy of Neural ODEs and ResNet50 under FGSM and PGD attacks. For both MNIST and CIFAR10, we set the size of perturbation $\epsilon$ of PGD in an infinite norm sense, the size of PGD step is set to 0.01, the number of PGD steps is set to 40 and a uniform random perturbation is added before performing PGD.}
\label{exp2}
\end{table}

\begin{figure}[h]
\centering
\begin{minipage}[t]{0.48\textwidth}
\centering
\includegraphics[width=7.5cm]{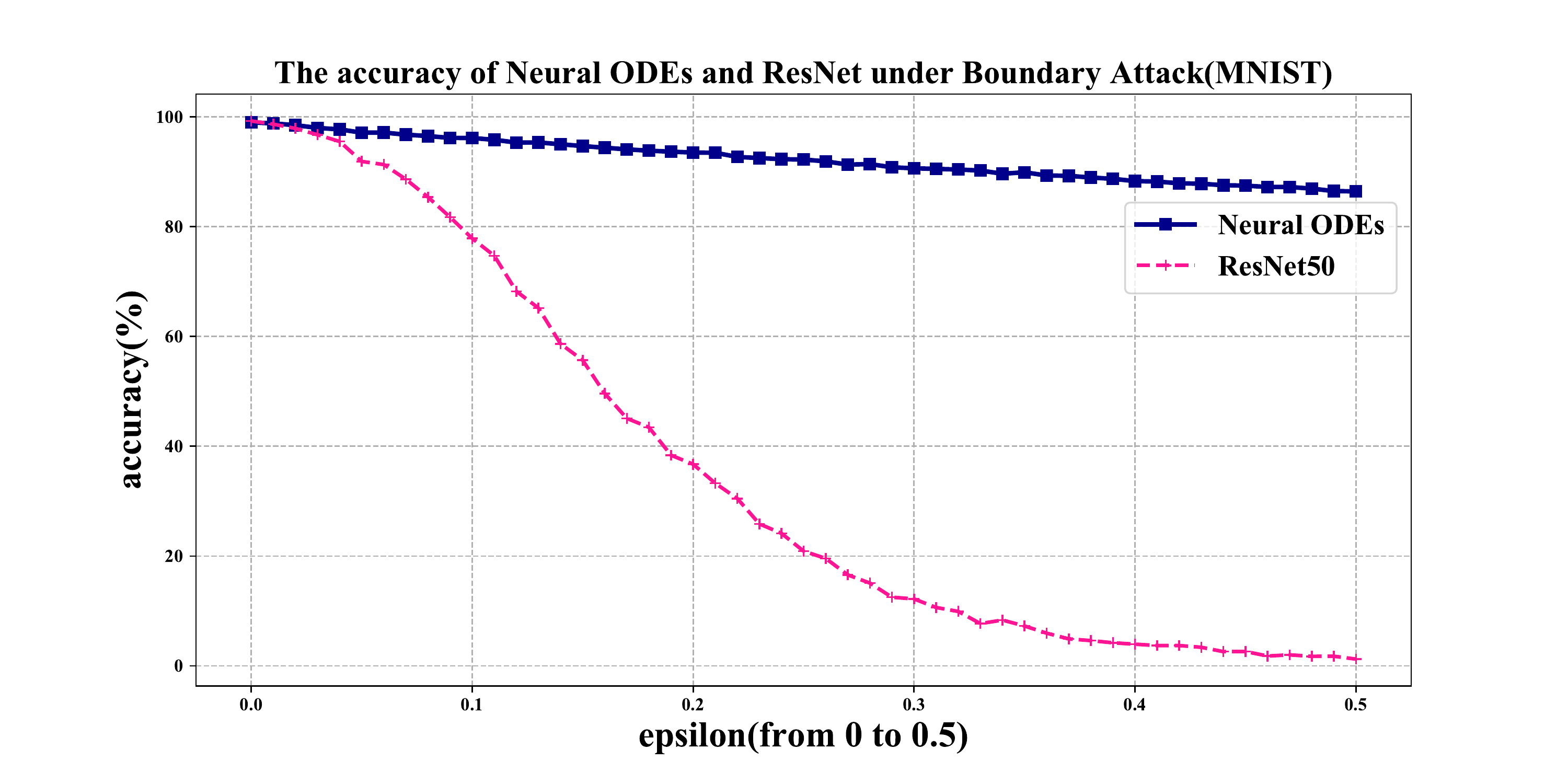}
\end{minipage}
\begin{minipage}[t]{0.48\textwidth}
\centering
\includegraphics[width=7.5cm]{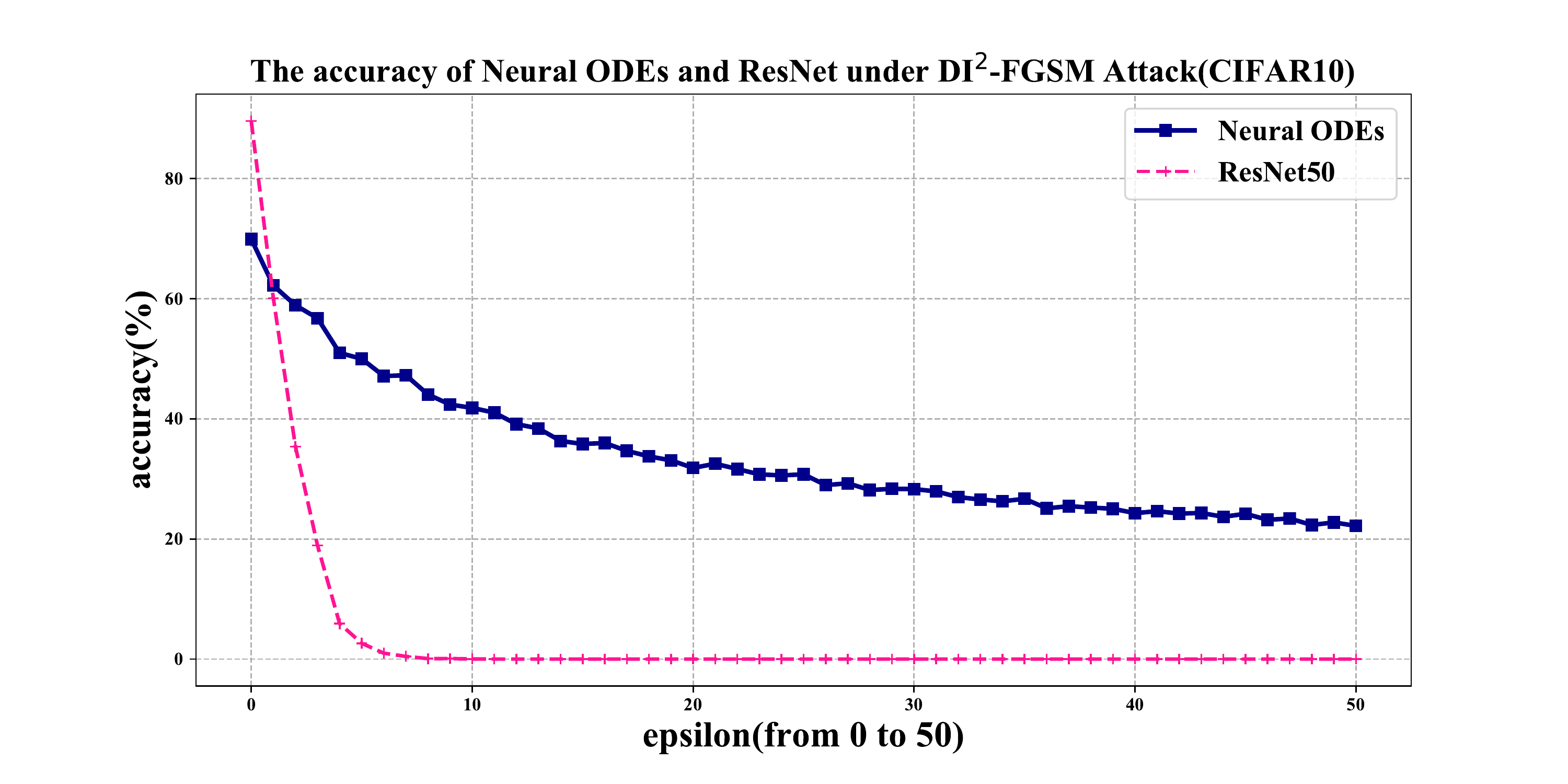}
\end{minipage}
\caption{The evaluation of ResNet and Neural ODEs under DI$^{2}$-FGSM. $\epsilon=0$ represents the accuracy of ResNet and Neural ODEs without the adversarial attack applied. For MNIST, the maximum perturbation $\epsilon$ is set to 0.5 among all experiments, with pixel value in [0,1]. For CIFAR10, the maximum perturbation $\epsilon$ is set to 50 among all experiments, with pixel value in [0,255].}
\label{exp3}
\end{figure}

As shown in Figure \ref{exp3}, when facing adversarial noises generated by DI$^{2}$-FGSM, for MNIST, Neural ODEs remains strongly resistant to perturbations while the accuracy of ResNet drops sharply with larger $\epsilon$. For CIFAR10, the reduction in accuracy of Neural ODEs is significantly less than ResNet50 with an increase of $\epsilon$. Overall, Neural ODEs is more stable when facing white-box adversarial noises compared with ResNet.

\subsubsection{The evaluation on black-box adversarial attacks}

Boundary Attack is a decision-based attack that starts from a large adversarial perturbation and then seeks to reduce the perturbation while staying adversarial\citep{brendel2017decision}. For MNIST, Boundary Attack has almost no effect on Neural ODEs. For CIFAR10, however, it seems that Neural ODEs also fails in this gradient-free attack, but the performance of Neural ODEs is still better than ResNet50 when $\epsilon$ is larger than 33.

\begin{figure}[h]
\centering
\begin{minipage}[t]{0.48\textwidth}
\centering
\includegraphics[width=7.5cm]{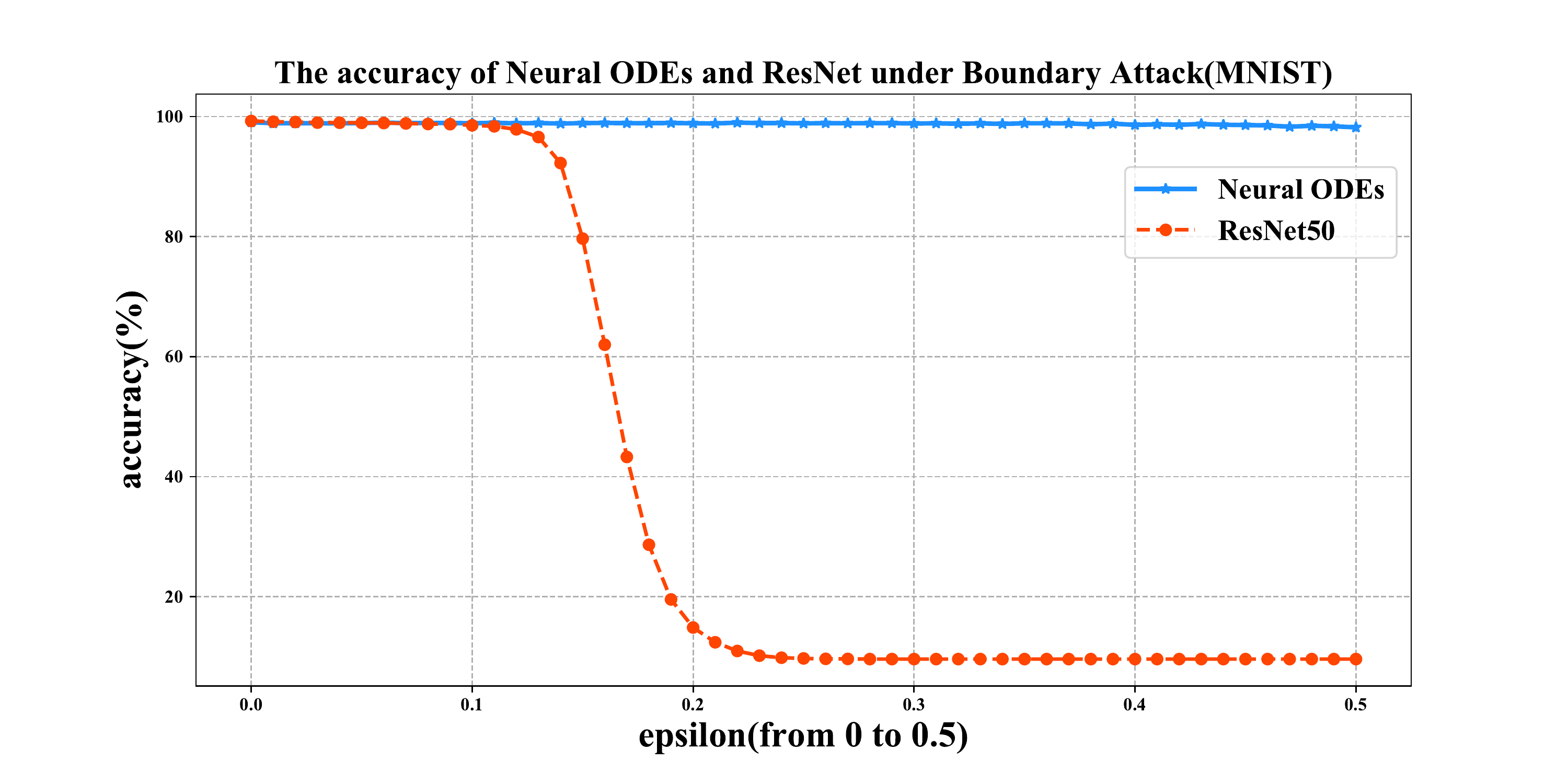}
\end{minipage}
\begin{minipage}[t]{0.48\textwidth}
\centering
\includegraphics[width=7.5cm]{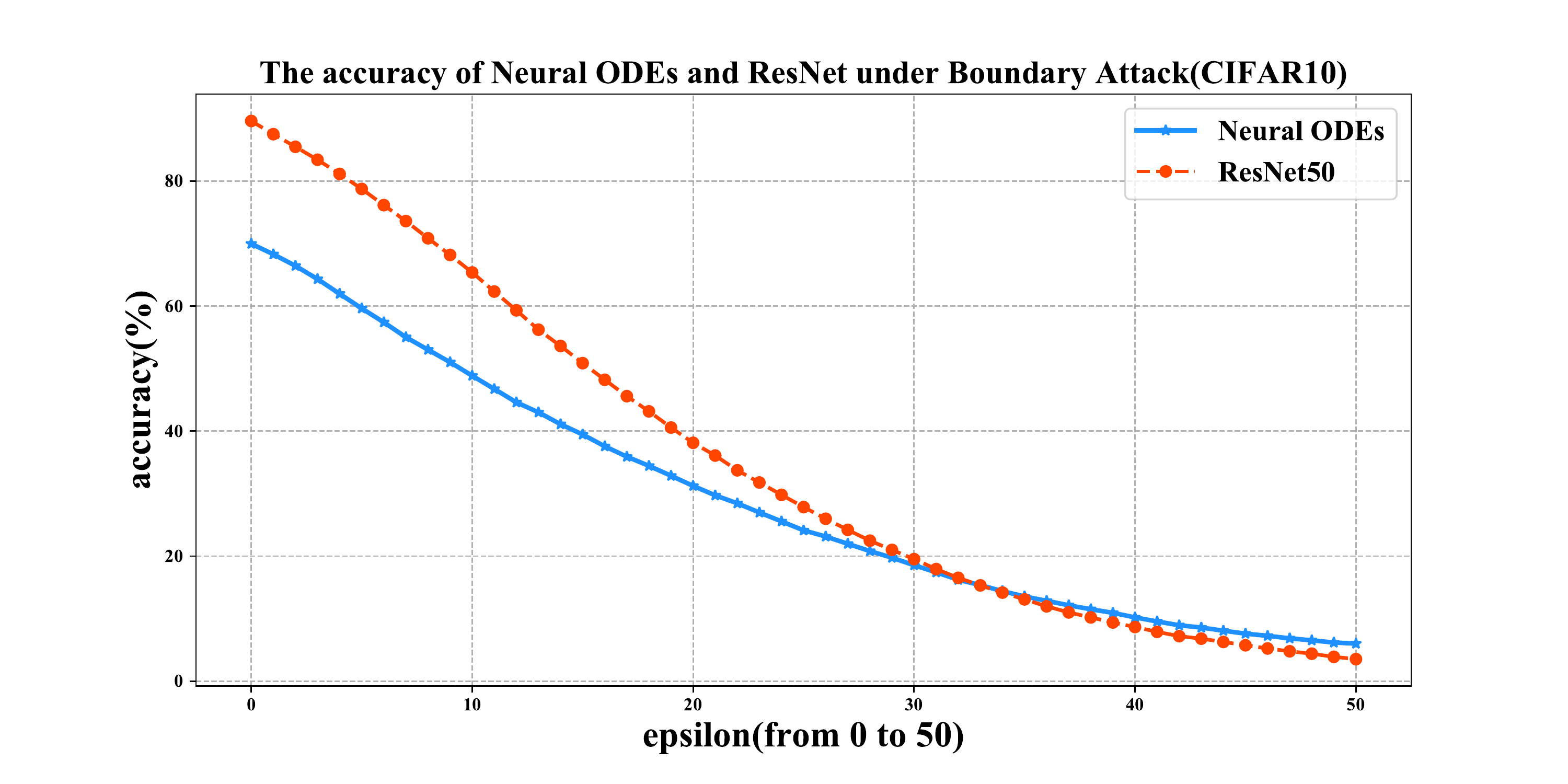}
\end{minipage}
\caption{The evaluation of ResNet and Neural ODEs under Boundary Attack. For Boundary Attack, the number of queries is fixed at 1,000. The initial step size and orthogonal step size are set to 0.1. }
\label{exp4}
\end{figure}

\subsubsection{Comparing with adversarial training methods}

As one can see, Neural ODEs can remain its robustness and outperform PGD, TRADES, and YOPO without taking any adversarial training methods.

\begin{table}[h]
\begin{center}
\begin{tabular}{c|c|c|c}
\hline
Model        & Adversarial training method                              & Clean & PGD-20($\epsilon=8/255$) \\ \hline
PreAct-Res18 & PGD-10\citep{madry2017towards}          & 84.82      & 41.61                    \\
PreAct-Res18 & TRADES-10\citep{zhang2019theoretically} & \textbf{86.14}      & 44.50                    \\
PreAct-Res18 & YOPO-5-3\citep{zhang2019you}            & 83.99      & 44.72                    \\ \hline
Neural ODEs  & -                                                        & 69.94      &  \textbf{59.06}                       
\end{tabular}
\end{center}
\caption{The comparison of robustness between Neural ODEs with PreAct-Res18 trained by three types of adversarial training methods on CIFAR10.}
\label{exp5}
\end{table}

\section{Conclusion and discussion}
In this paper, we highlighted and analyzed the natural robustness of Neural ODEs. We proved there are two similar upper bounds for ResNet and Neural ODEs under assumptions of continuity and step size $h\rightarrow0$. We showed that it is the step size $h = 1$ causes ResNet and the other three deep neural networks with skip connections fail on adversarial examples. Neural ODEs define a continuous dynamic system with step size $h\rightarrow0$. However, in actual computations, Modern ODE solvers adaptively adjust the step size to solve the continuous dynamic system and this brings us a weaker upper bound for Neural ODEs. We experimentally showed that Neural ODEs is more robust on adversarial examples compared with ResNet.

According to Theorem \ref{thm1} and \ref{thm2}, the upper bound $c|\epsilon|$ and $\widehat{c}|\epsilon|$ are related to Lipschitz constant and independent with the step size $h$. \cite{cisse2017parseval} showed that the robustness of DNNs can be improved by constraining the Lipschitz constant. In this paper, however, we are more concerned with the step size $h$, because the upper bound cannot be guaranteed if $h\rightarrow0$ is not satisfied. We experimentally demonstrated that the difference of ResNet with Neural ODEs in step size $h$ is sufficient to make a significant difference in robustness, even without constraining the Lipschitz constant. Nevertheless, constraining the Lipschitz constant is expected to make neural networks to be more robust, which we would like to consider as our future work.

Due to the similarity between DNNs and numerical methods for solving ODEs, we believe that we can learn from the numerical stability analysis to design more robust deep neural networks.

\clearpage

\bibliography{iclr2021_conference}
\bibliographystyle{iclr2021_conference}

\clearpage
\appendix
\section{Training configuration of Neural ODEs} \label{appendix1}
\renewcommand\thetable{\Alph{section}\arabic{table}}    
\setcounter{table}{0}
Table \ref{trainconfig} summarizes our experiment settings for training Neural ODEs. We only change some hyperparameters in Neural ODEs, without taking any help of adversarial defense methods.

\begin{table}[h]
\begin{center}
\begin{tabular}{c|c|c}
Training Configurations & MNIST     & CIFAR10   \\ \hline
training method         & Adam      & Adam      \\
ODEslover               & dopri5    & dopri5    \\
convolution layers      & 4         & 5         \\
convolution channels    & 128       & 256       \\
start time $t_{0}$      & 0         & 0         \\
stop time  $t_{1}$  & 100       & 500       \\
training epochs         & 100       & 200       \\
initial learning rate*  & $10^{-3}$ & $10^{-4}$ \\
mini-batches size       & 32        & 32       
\end{tabular}
\end{center}
\caption{The training configurations of Neural ODEs on MNIST and CIFAR10. *Every 50 epochs, the learning rate is reduced to half.}
\label{trainconfig}
\end{table}
It is worth to mention that there are no common training configurations for Neural ODEs on CIFAR10, \cite{dupont2019augmented} had reported that the accuracy of Neural ODEs on CIFAR10 test sets is 53.7\%$ \pm $0.2\%, our experiment result is better than it. 

\end{document}